\setlist{leftmargin=25pt} 
\newcommand{\CourseDate}{}
\setfoot{ \CourseDate}{}{\thepage}
\newenvironment{lemma}[2][Lemma]{\begin{trivlist}
\item[\hskip \labelsep {\bfseries #1}\hskip \labelsep {\bfseries #2.}]}{\end{trivlist}}
\newenvironment{corollary}[2][Corollary]{\begin{trivlist}
\item[\hskip \labelsep {\bfseries #1}\hskip \labelsep {\bfseries #2.}]}{\end{trivlist}}
\begin{document}
    \newpage
    \hypertarget{intro}{}
    \newpage
    \linespread{1}
    \selectfont

\title{\Huge\bfseries EEFSUVA: A Mathematical Olympiad LLM Benchmark} 
\author{Nicole N. Khatibi, Daniil A. Radamovich, Michael P. Brenner }
\date{}

\maketitle

\begin{abstract}

Recent breakthroughs have spurred claims that large language models (LLMs) match gold medal Olympiad to graduate-level proficiency on mathematics benchmarks. In this work, we examine these claims in detail and assess the extent to which current benchmarks capture genuine LLM mathematical reasoning. The composition of these benchmarks, primarily drawing from the International Mathematics Olympiad (IMO) and related  competitions, may overstate models’ reasoning ability due to potential data contamination and a narrow focus on familiar problem types. To enable a more holistic assessment of mathematical understanding, we introduce EEFSUVA, a novel benchmark curated from under circulated regional and national Olympiads of Eastern Europe and the countries from the former Soviet Union. These contests feature problems of comparable difficulty to the IMO and are renowned for demanding nonstandard problem-solving techniques, yet their problems are far less prevalent in online corpora. Preliminary results suggest that even state-of-the-art LLMs exhibit a notable performance decline on EEFSUVA relative to other Olympiad-style benchmarks. These findings also suggest the potential importance of broader evaluation datasets for a fuller assessment of mathematical reasoning and for guiding future model development.

\end{abstract}

    \section{Introduction}
\noindent The emergence of large language models (LLMs) has profoundly impacted numerous scientific disciplines, including mathematics. Within the mathematical community, there is a growing discourse regarding the capabilities of LLMs, with some researchers claiming that these models achieve problem-solving efficiency comparable to early-year graduate students. Beyond pure performance, there is also a popular perception that these LLMs serve as powerful tools for individual mathematical development and success.\\

\noindent  A quick and critical examination of existing benchmarks and platforms specializing in LLM evaluations reveals an interesting characteristic. Many existing benchmarks often rely on problems from a relatively narrow range of competitions.  While these problems are undeniably challenging, they represent only a subset of advanced mathematics, leaving other problem types underrepresented. Incorporating a broader spectrum would support a more comprehensive evaluation of mathematical reasoning. Our work addresses this gap by introducing a novel benchmark consisting of thirty-nine problems primarily from mathematics Olympiads of Eastern Europe and countries from the former Soviet Union and six problems from various mathematical texts specifically by Vladimir Arnold. The problems cover topics such as combinatorics, algebra, and number theory, but do not include geometry. The problems selected for this benchmark share a crucial commonality, they consistently demand an extra, nontrivial step in mathematical reasoning, often requiring novel thinking strategies beyond mere computation to pattern recognition. This main characteristic is exactly what makes them particularly suitable for unraveling and probing at the limitations of current LLMs in mathematical comprehension. The construction of this benchmark involved an extensive and meticulous search for specific problem sources. Our searching process led us to identify Eastern European and countries of the former Soviet Union mathematic regional and national Olympiads as particularly valuable. The problems, often less widely circulated than their Western counterparts, proved exceptionally effective for benchmarking due to two primary reasons, their relative obscurity minimizes the risk of LLM pre-training exposure, and their inherent difficulty requires truly novel problem solving approaches. While previous benchmarks often relied on heavily explored competition problems from the IMO and related competitions, our approach intentionally sought sources known for their equally challenging national and regional competitions from other traditions. The integrity of our evaluation is further enforced by the unambiguous nature of the problem solutions. Which is to be further discussed in the following sections.

    \section{Related Work}

\noindent A leading resource for benchmarks based on mathematics competitions and Olympiads is MathArena, which provides a comprehensive overview of existing datasets and model performance. Our benchmark was curated with MathArena as a primary reference point, as the platform systematically presents both competition sources and large language model results. However, after analyzing the scope and problem sets emphasized in MathArena, we deliberately shifted our focus toward lesser-known mathematics Olympiads and competitions, aiming to broaden the diversity of our benchmark and reduce overlap with existing evaluations. Drawing on prior experience in mathematics Olympiads at both the national and international levels, we were able to identify clear directions for sourcing problems for our benchmark. \\

\noindent MathArena has already conducted evaluations of LLMs on the 2025 IMO, widely regarded as the standard for the most challenging mathematics competition. In their evaluation, GPT-5 achieved a 38.10\% accuracy rate. Using our benchmark, however, GPT-5 Thinking obtained a slightly lower score of 35.89\%, suggesting that our benchmark yields comparable outcomes while capturing a different problem selection than MathArena. In contrast, Gemini 2.5 Pro reached 31.55\% accuracy on MathArena, but achieved 0\% accuracy on our benchmark, underscoring the comparable difficulty and different emphasis of our dataset. It is worth noting that, according to MathArena’s documentation, GPT-5 was released after the 2025 IMO, raising the possibility of data contamination. By comparison, nearly all of the problems in our benchmark are at least five years old, meaning they were potentially available in model pretraining corpora. Nevertheless, GPT-5-Thinking solved only about one-third of them. This outcome suggests that the limited performance cannot be attributed to recency or contamination but rather reflects the intrinsic reasoning difficulty of the specific Olympiads we chose and the problems sourced from them. Moreover, among the various competitions reported on MathArena, which include the Harvard-MIT Math Tournament (HHMT), Stanford Math Tournament (SMT), and the Brown University Math Olympiad (BRUMO), the IMO produced some of the lowest accuracy rates across models, yet our benchmark reduced performance even further. This suggests a potential limitation of relying solely on specific competitions as the basis for benchmarking, since it may not fully capture the range of reasoning challenges posed by other mathematical Olympiads and competitions.

\begin{table}[h]
\centering
\begin{tabular}{lcccccc}
\toprule
Model & HHMT & SMT & BRUMO & IMO &USMO & EEFSUVA \\
\midrule
Gemini 2.5 Pro      & 82.50\% & 84.91\%  & 90.00\% & 31.55\% &24.40\% &0\% \\
GPT-5 Thinking/High & 88.33\%  & 91.98\% & 91.67\% & 38.10\% &n/a & 35.89\%  \\
\bottomrule
\end{tabular}
\caption{Comparison with EEFUSVA}
\end{table}

It is important to note, as stated on MathArena, that HHMT, BRUMO, and IMO all occurred prior to the release of GPT-5, raising the possibility of data contamination in the reported results for this model. In the case of Gemini 2.5 Pro, similar concerns arise for HHMT, since the model was released after the competition date. By contrast, our benchmark was constructed from Olympiads and competitions dating back at least five years. This makes contamination more likely, since such problems would almost certainly have been present in the model’s training data. However, if that were the case, one would expect the accuracy on our benchmark to be higher rather than lower. This contrast suggests that the range of competitions commonly used in benchmarks, though valuable, reflects only part of the mathematical landscape. These contests are unquestionably difficult, but they represent only one tradition of problem solving, leaving other competitions of comparable rigor less visible. Benchmarks such as those in MathArena provide a strong standard, yet one might anticipate that our benchmark, drawing on older, non-Western problems, would overlap more with pre-training corpora and thus yield higher accuracy. This potentially suggests that the difficulty of non-Western competitions is under recognized and the reliance on a limited benchmark set may give an incomplete picture of model capabilities. Broadening evaluation to include a wider range of mathematical reasoning challenges could therefore provide a more balanced assessment and guide further progress in LLM development.

    \section{Problem Curation}

\begin{figure}[H]
  \centering
  \begin{tikzpicture}[
      scale=0.6,                   
      transform shape,             
      process/.style={
        rectangle,
        font=\bfseries,
        draw=gray!70,
        fill=#1!50,
        text=black,
        rounded corners=6pt,
        minimum width=5cm,
        minimum height=1.0cm,
        inner sep=6pt,
        align=center,
        drop shadow={
          shadow xshift=1pt,shadow yshift=-1pt,
          fill=black!20,rounded corners=6pt
        }
      },
      arrow/.style={
        thick,
        -{Latex[length=2mm]},
        black!70
      }
    ]
    \node[process=Blue]   (s1) at (0,0)    {Check what Olympiads\\have been used the most};
    \node[process=TealBlue]   (s2) at (4,-2)   {Search for Eastern\\European and former SU Olympiads (older)};
    \node[process=Orange] (s3) at (0,-4)   {Extract any sources with\\numerical answers first};
    \node[process=Purple] (s4) at (4,-6)   {Solve each problem and check correctness};
    \node[process=Green]  (s5) at (0,-8)   {Run the LLM for a solution};
    \node[process=Red]    (s6) at (4,-10)  {Evaluate result};

    \draw[arrow] (s1) -- (s2);
    \draw[arrow] (s2) -- (s3);
    \draw[arrow] (s3) -- (s4);
    \draw[arrow] (s4) -- (s5);
    \draw[arrow] (s5) -- (s6);
  \end{tikzpicture}
  \caption{Workflow for selecting Olympiad problems}
  \label{fig:workflow-zigzag}
\end{figure}

\subsection{Olympiads}
\noindent The problems used to curate this benchmark were found in the following way. A quick search of previous benchmarks showed that the United States of America Mathematical Olympiad (USAMO) and the IMO were heavily used to evaluate these LLMs. These sources were very popular which meant that their problems were already heavily utilized. This made even going back to older years of the competitions fruitless. Because of this, we  decided to turn our attention to lesser-known, yet highly competitive Olympiads from countries from the former Soviet Union and Eastern European countries. These mathematical Olympiads are known in the mathematics community to be extremely competitive, with problems ranging from regional to state to national levels. This opened a new pathway to potentially more difficult problems. Yet, their main attraction was not even their difficulty, it was their unpopularity. They were not as widely circulated in the public eye, which made them highly valuable in our search. \\

\noindent Knowing now where to look, the issue began in the hunt for finding the specific Olympiads. Their underexposure also meant the resources would be scarce. Although most competitions were not easily tracked down, deep searching led us to many valuable sources providing problems from competitions dating back five to forty years. Typical Olympiad competitions feature a wide variety of problem types and topics, including both proof-based and numerical problems in areas such as number theory, geometry, and algebra. We focused exclusively on problems with numerical solutions, avoiding proof-based problems. Many proof-based questions involve well-known claims or standard arguments that are easily recognized by LLMs, increasing the likelihood of correct responses through pattern recognition rather than reasoning. In addition, proof-based problems require additional steps for verification and often depend on human judgment. To avoid the additional time and subjectivity involved, we focused exclusively on problems that could be evaluated unambiguously. Specifically, we prioritized problems with clear numerical answers that could be verified automatically and in seconds. These main objectives made finding the exact sources that we needed an additional challenge. Most Olympiad competitions contain more proof-based problems than computational ones, so we were typically able to extract at maximum two suitable problems per competition. However, in many cases, we were lucky to find even one viable problem. However, despite these challenges, we were able to identify a substantial number of high-quality problems, and our hypothesis to shift our focus to countries from the former Soviet Union and Eastern European mathematical Olympiads and competitions deemed to be correct.



\subsection{V. I. Arnold Work}
\noindent Alongside the curated problems from various mathematical Olympiads and competitions, we also included six  problems drawn from works by Vladimir I. Arnold, including "Ordinary Differential Equations", "Mathematical Methods of Classical Mechanics", and "The Mathematical Trivium". The two books included solutions for selected problems, and our search followed the same approach for the mathematical competitions, we focused only on problems with numerical answers. Additionally, The Mathematical Trivium, an exam consisting of 100 problems to be solved within three hours, was another source. The Trivium was designed for advanced undergraduate and graduate students, with problems that draw on higher-level topics while retaining a largely computational format. Although not purely exercises in computation, these problems emphasize technical problem-solving in advanced areas of mathematics. While the Trivium has never had any official solutions published or verified in any text, it has long served as a recreational challenge among mathematicians. Among the first few problems in the Trivium, which is included in our benchmark, one initially appears to be a straightforward limit evaluation but turns out to be quite difficult without a clever approach. For our purposes, we solved and verified only the initial handful of problems, as the full Trivium set was not our primary focus, and remains to potentially be a topic for later work. The benchmark emphasizes Olympiad-style reasoning, and Trivium was selected primarily to test computational ability in a complementary way. We chose to include problems from Arnold’s works as part of our curation strategy, given his reputation as one of the greatest polymath mathematicians. His texts pose many challenging problems, and given their popularity, we expected LLMs to perform well on them. However, we still found several problems that the models failed to solve correctly.



\noindent --------------

\noindent Each problem selected from the Olympiads and Arnold’s texts was first solved independently, after which the  proposed solutions were consulted to verify correctness. This ensured that our solutions were both self-derived and consistent with the established sources. This process not only verified the correctness of our own solutions but also served as a check on the validity of the proposed solutions themselves. After establishing the correctness of each solution, we submitted the problems to the LLMs and evaluated their responses relative to the verified solutions. Our narrowed focus on numerical problems made the initial evaluation straightforward. Often a simple 'yes' or 'no' that could be quickly confirmed using the provided solution. Additional layers of verification were also applied, which will be explained in more detail in the following sections.

\subsection{Problem Types}

\noindent This benchmark consists of two main sources of problems, as mentioned before, the various different mathematic competitions, exams, and Olympiads, and the various works from Vladimir I. Arnold. The majority of the problems taken from the various competitions as seen in figure a) were from combinatorics and number theory, where as in figure b) from the texts the majority was taken from Mathematical Methods for Classical Mechanics.

\subsubsection*{Problem Category and Source Breakdown }

\begin{center}
\begin{minipage}{0.40\textwidth}
\centering
\begin{tikzpicture}
\pie[text=legend, radius=2.5, color={blue!50, red!60, green!40, orange!60, purple!60}]{
61.5/Combinatorics ,
17.9/Number Theory,
10.3/Graph Theory ,
5.1/ Algebra,
5.1/Functional Equations
}
\end{tikzpicture}

\vspace{0.7em}
\textbf{a) Distribution of Problems - Olympiads}
\end{minipage}
\hfill
\begin{minipage}{0.40\textwidth}
\centering
\begin{tikzpicture}
\pie[text=legend, radius=2.5, color={purple!60, teal!60, orange!60}]{
28.6/Trivium,
57.1/MMCM ,
14.3/ODE 
}
\end{tikzpicture}

\vspace{0.7em}
\textbf{b) Distribution of Problem - Arnold Texts}
\end{minipage}
\end{center}

 \subsubsection{Combinatorics}
\noindent  Combinatorics has a central role in mathematical Olympiads due to its blend of creativity, logic, and minimal prerequisite knowledge. Although parts of the broader field of combinatorics intersect with algorithmic techniques, Olympiad-style combinatorics is intentionally non-algorithmic. These problems almost never collapse to standard procedures and are often designed to frustrate  standard approaches. Many lack equations to manipulate, figures to analyze, or even numbers to compute. Consequently, they require flexible reasoning that entails not only clever constructions but also the invention of new structures, rather than reliance on the recognition of familiar patterns.
LLMs are generally trained to identify and complete patterns based on seen data, but in Olympiad combinatorics, the structure must often be created on the spot, not retrieved from prior examples. These problems typically involve multistep chains of reasoning, where each step depends delicately on the previous one. This makes them highly sensitive to even small logical errors. However, LLMs frequently generate hallucinated intermediate steps and introduce logical inconsistencies, thereby producing solutions that are incorrect or incomplete. It is precisely these characteristics that make combinatorics account for such a significant portion of the benchmark.

 \subsubsection{Number Theory}
\noindent  Olympiad-style number theory problems are a combination of pure mathematics with formal logic, often making them some of the most deceptively difficult challenges for both humans and LLMs. Although number theory appears algorithmic, since it draws on familiar topics like modular arithmetic and Diophantine equations, the Olympiad variant is rarely computational. Instead, it emphasizes clever manipulation and substitutions with unexpected usage of elementary tools in non-standard ways, similar to that of combinatorics problems. Many number theory solutions involve a vital moment in which the initially obscure problem becomes instantly simplified through a key observation or trick. For LLMs, these problems are particularly challenging because they often cannot be solved by surface-level pattern recognition. The mistakes of LLMs in number theory problems are very similar to those of combinatorics problems as well. Typically, errors arise from hallucinated steps or the failure to recognize key simplifications. Owing to its similarity in character to combinatorics problems, number theory exhibits a disproportionately high error rate compared to other topics in our benchmark.
    
\section{Evaluation}

\noindent Evaluating the effectiveness of the LLMs in solving the curated problems was a multistep process. The first stage involved independently analyzing and solving each problem and constructing an outlined solution, which provided a skeletal outline of the proposed answer. Next, we verified the accuracy of the solutions provided by the original sources. As noted earlier, most sources included proposed solutions which allowed us to confirm the validity of both the problems and their corresponding answers. Our outlines were compared against the proposed solutions and the sources were thus verified. Once this verification step was completed, we submitted each problem to the LLMs specifically, Gemini 2.5 Pro and Chat GPT 5 Thinking using a brand-new chatbot session for every single problem to prevent any cross contamination of context. After several minutes, the model would output a numerical answer, which we then compared against the verified solution. In order to fully mark a problem as incorrectly solved, we ran each problem in a new chat session twice. This was done so because in some cases the LLM produced an incorrect answer on the first attempt, however, occasionally on the second attempt it produced the correct result. In particular, we never encountered a case in which the model produced a correct solution on the first attempt and an incorrect one on the second. Below are the results from the two leading commercial LLMs  we conducted tests on. 
\\\\

\begin{table}[H]
\centering
\renewcommand{\arraystretch}{2.5}
\resizebox{\textwidth}{!}{%
\begin{tabular}{|l||c|c|c|c|c|c|c|}
\hline
\textbf{Model} & \textbf{Overall} & \textbf{Combinatorics} & \textbf{Number Theory} & \textbf{Graph Theory} & \textbf{Algebra} & \textbf{Functional Equations} & \textbf{Arnold} \\
\hline
\textbf{GPT-5 Thinking} & 35.89 & 25.00 & 42.85 & 25.00 & 100.00 & 0.00 & 28.57 \\

\textbf{Gemini 2.5 Pro} & 0 & 0 & 0 & 0 & 0 & 0 & 0 \\
\hline
\end{tabular}
}
\caption{\textbf{Pass Rates by Model and Question Type}}
\label{Table:eval_results}
\end{table}

\noindent The percentage rates for correctly solved problems from the evaluation are shown above. The emergence of GPT-5 led to a significant performance increase over Gemini 2.5 Pro, which had an overall zero percent pass rate. Overall, with a pass rate of approximately 36\%, it is clear that the new advancements still require further development. \\

\noindent In our evaluation process, it is also worth noting that we paid very close attention to the reasoning and solution steps for the test problems. While our primary criterion was the correctness of the numerical answer, the reasoning of the LLM shed light on the potential reliance on pattern recognition. Our close attention to the LLM's thought process was additionally in an attempt to see if the model picked up on the fact that the problems and some proposed solutions were open sourced.  When the model encountered a prompt that resembled for example, a known IMO–shortlist problem, it seemed to anchor heavily on that reference. The LLM seemed to import the familiar answer structure it had been previously trained on and proceeded as if the current task were the same. In multiple instances, we believed it probable that the superficially similar source problem differs in a small but crucial way, and that deviation is enough to make the transplanted solution schema invalid. The model most likley recognizes the problem \emph{type} but, biased by its training on the familiar variant, cannot adapt its predetermined reasoning to the altered constraints and therefore produces an incorrect result. In effect, it retrieves a cached solution rather than constructing a fresh argument from first principles. This is precisely why, once the technical details diverge, the LLM struggles to reconcile the mismatch. This behavior suggests that heavy exposure to IMO-style material produces a brittle competence, strong on memorized templates yet highly vulnerable to small structural changes. Below highlights an important example of this.

\medskip

\begin{tcolorbox}[breakable, colframe=red!100, colback=red!5, title={Gemini 2.5 Pro Reasoning}]

\textbf{Problem:} Let \(k\) be a positive integer. Determine the least integer \(n \ge k + 1\) for which the following game can be played indefinitely: \\

Consider \(n\) boxes labelled \(b_1, b_2, \dots, b_n\). Initially, box \(b_i\) contains exactly \(i\) coins for each \(i=1,2,\dots,n\). At each step one performs:\\

A. Choose \(k+1\) boxes.\\
B. Of these \(k+1\) boxes, select \(k\) of them and remove at least half of the coins from each; then add to the remaining box \(b_i\) exactly \(i\) coins.\\
C. If any box is empty, the game ends; otherwise, repeat from step 1.

\medskip

\textbf{LLM Answer:} \[
n =
\begin{cases}
k + \left\lfloor \dfrac{2}{k} \right\rfloor + 1, & k \ge 2,\\[6pt]
3, & k = 1.
\end{cases}
\]

\textbf{LLM Reasoning:} "This piecewise definition seems to be the correct result based on our analysis. For a contest setting, a single elegant formula is usually expected, which suggests there might be a more subtle argument. However, based on the provided logic, this is the derived answer".\\\\
"Given the case-by-case success of the lower bounds, the answer appears to be piecewise. However, the value \( n = k + \left\lfloor \frac{k}{2} \right\rfloor + 1 \)
 is the value from the official IMO Shortlist, which suggests that my failure analysis for k=2,3 or my subsequent analysis of player strategy might be flawed". \\

\medskip
{\color{blue}\textbf{Note:} In all of our testing we never mention from where the problems were sourced from or which competition it came from, the problems were sent as is. The LLM recognizes a similarity between the problem we provided and a problem it encountered during training, however, the two are not identical. The model notices that this may be pertaining to some mathematical contest, however it cannot identify which one, but it believes it may be from an IMO. Instead of solving the problem or running an internet search, it bases the structure of the solution on the recognized IMO problem instead of solving it properly. Once there is a detection of a problem similar to problems fed during training, the model immediately relies on the problem it knows, even though the two are not the same and are not from the same competitions. }
\end{tcolorbox} 

 \

 \noindent It is also important to note that the reasoning process employed by LLMs significantly influences their performance on our benchmark. Specifically, models such as Gemini 2.5 Pro seem to  adopt one of two main strategies when addressing a given question: drawing upon their internal training data (i.e., learned knowledge) or conducting a live web search. When the model encounters a question resembling material it has already learned, it is more likely to bypass live search and rely solely on its training. While this approach is efficient in many cases, it becomes less effective when the model’s training coverage is limited. By focusing training predominantly on a limited set of challenging mathematics competitions, we risk overlooking other competitions of comparable difficulty that are equally worthy of inclusion. As noted above, minor changes in problem structure led the LLM to assume it already knew the appropriate steps to apply. However, when the model encountered the singularity, it became evident that it did not possess a clear strategy for proceeding. This leads us to another critical standpoint, is the model unable to solve the problem generally due to a lack of exposure, or is the model not noticing that the question and some proposed solution exist on the open web. Both of these pose an issue, however if the problem and a proposed solution are available with a search, the greater issue becomes why did the model not notice it when if it ran into issues using pretrained knowledge.  

    \section{Conclusion}

\noindent The distinctive aspect of this benchmark is that every problem was sourced directly from the open web. Our selection process was guided by a specific hypothesis: many existing benchmarks appear to follow a narrow pattern, drawing predominantly from the IMO and other well-known Western competitions. While these benchmarks are valuable, their concentrated focus may help explain some of the performance patterns observed in LLMs. Motivated by this observation, we sought to design a benchmark that could probe model capabilities from a different angle. Several factors contributed to the behavior we observed: the models’ limited prior exposure to the Olympiads we selected, their relative underrepresentation in public datasets, the inherent reasoning tendencies of LLMs, and the high difficulty level of the chosen problems. Although the problems in this benchmark may appear simple because of their deterministic, numerical form, they nevertheless posed significant challenges for the models and revealed important gaps in current capabilities.
    \newpage
    
\makeatletter
\renewcommand\@biblabel[1]{}
\makeatother

    \appendix
\section{Appendix} \label{sec:appendix}

\newtheorem{proposition}{Proposition}

\subsubsection{Detailed solution of Combinatorics problems}

More than half of the selected Olympiad problems were combinatorical problems therefore, we give an example from our benchmark with a full solution below. This problem is one that Gemini 2.5 Pro failed to solve correctly however GPT-5 Thinking did solve correctly. 

\medskip

\begin{tcolorbox}[breakable, colframe=blue!60!yellow, colback=blue!5!white, title=Sample Olympiad Combinatorics Problem and Full Solution]
\textbf{Problem:} 
A ten-level \(2\)-tree is drawn in the plane: a vertex \(A_1\) is marked, and it is connected  
by segments with two vertices \(B_1\) and \(B_2\). Each of \(B_1\) and \(B_2\) is connected by  
segments with two of the four vertices \(C_1, C_2, C_3, C_4\) (each \(C_i\) is connected with  
exactly one \(B_j\)); and so on, up to \(512\) vertices \(J_1, \ldots, J_{512}\). Each of the vertices  
\(J_1, \ldots, J_{512}\) is coloured blue or golden.

\vspace{1em}

Consider all permutations \(f\) of the vertices of this tree, such that:
\begin{enumerate}
    \item If \(X\) and \(Y\) are connected with a segment, then so are \(f(X)\) and \(f(Y)\),
    \item If \(X\) is coloured, then \(f(X)\) has the same colour.
\end{enumerate}

Find the maximum \(M\) such that there are at least \(M\) permutations with these properties,  
\emph{regardless of the colouring}.

\newcommand{\thinline}{\noindent\rule{\linewidth}{0.2pt}}

\thinline

\textbf{Solution:}
The answer is: $2^{2^7}$

First we need a suitable terminology. Similarly to a 10-level $2$-tree, we can define a $k$-level $2$-tree for $k \geq 1$.
For convenience we suppose that all the segments between vertices are directed from a letter to the next
one. The number of the letter marking a vertex we call the \emph{level} of this vertex; thus $A_1$ is the only vertex
of level 1, $B_1$ and $B_2$ belong to level 2, and so on. We will also call \emph{descendants} of a vertex $X$ all vertices
which can be reached from $X$ by directed segments.

Let $T_1$ and $T_2$ be two $k$-level $2$-trees with coloured leaves. We call a bijection 
$f : T_1 \to T_2$ an \emph{isomorphism} when two conditions are satisfied:
\begin{enumerate}
  \item[(i)] if two vertices $X$ and $Y$ are connected by an edge in $T_1$, then $f(X)$
and $f(Y)$ are connected by an edge in $T_2$;
  \item[(ii)] if $X$ has some colour in $T_1$, then $f(X)$ has the same colour in $T_2$.
\end{enumerate}
When $T_1 = T_2$, we call $f$ an \emph{automorphism} of the tree.  
By $\chi(k)$ we denote the minimal number of automorphisms a $k$-level $2$-tree with coloured leaves can have
(the minimum is taken over all colourings).  
Our problem is to find $\chi(10)$.

\begin{lemma}\\
Isomorphism of trees preserves the level of a vertex.
\end{lemma}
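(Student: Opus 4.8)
\emph{Proof proposal.} The plan is to recover the level of a vertex purely from the underlying graph structure, so that any isomorphism is automatically level‑preserving. The key observation is that in a $k$-level $2$-tree the level of a vertex $X$ is determined by its graph distance to the root: the unique path from $A_1$ to $X$ meets exactly one vertex of each level $1, 2, \dots, \mathrm{level}(X)$, so $\mathrm{level}(X) = 1 + d(A_1, X)$, where $d$ is graph distance. Since any isomorphism $f\colon T_1 \to T_2$ preserves graph distance, it will then follow that $\mathrm{level}(f(X)) = 1 + d\bigl(f(A_1), f(X)\bigr) = 1 + d(A_1, X) = \mathrm{level}(X)$, \emph{provided} we first establish that $f$ sends the root of $T_1$ to the root of $T_2$.

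To prove that $f$ preserves the root, I would characterize $A_1$ intrinsically, without reference to the orientation of the segments. Assume $k \ge 2$ (the case $k = 1$, a single vertex, is trivial). Every vertex other than $A_1$ has exactly one parent, together with two children if it is internal and none if it is a leaf; hence each such vertex has degree $3$ or $1$, whereas $A_1$ has degree $2$. Thus $A_1$ is the unique vertex of degree $2$ in the tree. As $f$ is in particular an isomorphism of the underlying (uncoloured) graphs, it preserves degrees, so it must carry the unique degree‑$2$ vertex of $T_1$ to the unique degree‑$2$ vertex of $T_2$, i.e. $f(A_1) = A_1$. Combining this with the distance formula of the previous paragraph finishes the argument.

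The only real subtlety — and the step I would be most careful about — is that an isomorphism, as defined, respects only the undirected coloured graph and not the chosen orientation ``from $A_1$ outward'', so the root genuinely has to be recovered from graph‑theoretic data rather than assumed fixed; this also forces one to check that the degree pattern is non‑degenerate, which is why $k = 1$ must be excluded by hand and why for $k = 2$ one notes that all non‑root vertices are leaves of degree $1$. If one prefers to avoid the distance formula, the same conclusion follows by induction on the level: once $f(A_1) = A_1$ is known, a vertex $X$ of level $i+1$ is adjacent to its parent $P$ of level $i$, so $f(X)$ is adjacent to $f(P)$ of level $i$ and hence lies at level $i-1$ or $i+1$; the value $i-1$ is impossible because $f$ is a bijection that already maps the level‑$(i-1)$ vertices onto the level‑$(i-1)$ vertices, so $f(X)$ has level $i+1$, completing the induction.
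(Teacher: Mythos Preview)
Your argument is correct but proceeds in the opposite direction from the paper. The paper also starts from degree preservation, but it first singles out the \emph{leaves} (the degree-$1$ vertices) rather than the root, and then peels the tree from the outside in: last-level vertices are exactly the degree-$1$ vertices, their neighbours are forced to be the level-$(k-1)$ vertices, and ``so on'' towards the root. You instead pin down the root as the unique degree-$2$ vertex and then push outward, either via the clean formula $\mathrm{level}(X)=1+d(A_1,X)$ or by the level-by-level induction you sketch at the end. Both approaches hinge on the same fact (an isomorphism of graphs on the same vertex/edge counts preserves degrees), so neither is really deeper than the other; your distance formulation has the advantage of giving a one-line intrinsic description of the level, while the paper's leaf-first peeling avoids having to treat small $k$ separately, since the leaves are always exactly the degree-$1$ vertices regardless of $k\ge 2$. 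One notational quibble: when $T_1\neq T_2$ you should write ``$f$ sends the root of $T_1$ to the root of $T_2$'' rather than $f(A_1)=A_1$.
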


\begin{proof}
An isomorphism $f$ cannot diminish the degree of a vertex. Indeed, neighbours of each vertex $X$
become neighbours of $f(X)$, therefore the degree of $f(X)$ is not less than the degree of $X$.
By the pigeonhole principle it also means that the degree cannot increase.
It follows that the last-level vertices go to the last-level vertices. Therefore vertices of the previous level go to the same level,
since they remain neighbours of the last-level vertices, and so on.
\end{proof}

Now we are ready to solve the problem.\\
\textbf{First proof of the lower bound, by induction.}

\begin{proposition}
For each $k \geq 2$ we have
\[
\chi(k) \geq (\chi(k-1))^2.
\]
\end{proposition}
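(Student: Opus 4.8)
The plan is to exploit the recursive structure of a $k$-level $2$-tree $T$: deleting the root $A_1$ together with the two edges leaving it splits $T$ into two disjoint $(k-1)$-level $2$-trees, call them $T_1$ and $T_2$, rooted at $B_1$ and $B_2$. I fix an arbitrary colouring $c$ of the leaves of $T$; it restricts to colourings $c_1$ of the leaves of $T_1$ and $c_2$ of the leaves of $T_2$. The goal is to show that $(T,c)$ has at least $(\chi(k-1))^2$ automorphisms, for this and hence for every colouring, which upon taking the minimum over $c$ yields $\chi(k)\ge(\chi(k-1))^2$.

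First I would observe, using Lemma~1, that any automorphism $f$ of $(T,c)$ preserves levels, so it fixes $A_1$ (the unique level-$1$ vertex) and permutes $\{B_1,B_2\}$; since a branch $T_i$ is exactly the set of descendants of $B_i$ together with $B_i$, $f$ permutes $\{T_1,T_2\}$. Restricting to the subgroup of those $f$ that fix each branch setwise, such an $f$ restricts to a bijection of $T_i$ that respects edges and colours and — by Lemma~1 applied inside $T_i$, where $B_i$ is the unique level-$1$ vertex — fixes $B_i$; hence it is an automorphism of $(T_i,c_i)$. The key step is the converse gluing: given any automorphism $f_1$ of $(T_1,c_1)$ and any automorphism $f_2$ of $(T_2,c_2)$, define $f$ on $V(T)$ by $f(A_1)=A_1$, $f|_{T_1}=f_1$, $f|_{T_2}=f_2$. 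This $f$ is a bijection; it preserves all edges inside $T_1$ and inside $T_2$ by hypothesis and preserves $A_1B_1$ and $A_1B_2$ because $f_1(B_1)=B_1$ and $f_2(B_2)=B_2$; and it preserves colours since $c$ restricts to $c_1,c_2$. So $f\in\mathrm{Aut}(T,c)$, and distinct pairs $(f_1,f_2)$ give distinct $f$.

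This realizes $\mathrm{Aut}(T_1,c_1)\times\mathrm{Aut}(T_2,c_2)$ as a subgroup of $\mathrm{Aut}(T,c)$, so
\[
|\mathrm{Aut}(T,c)| \ \ge\ |\mathrm{Aut}(T_1,c_1)|\cdot|\mathrm{Aut}(T_2,c_2)| \ \ge\ \chi(k-1)\cdot\chi(k-1),
\]
where the last inequality holds because $T_1$ and $T_2$ are $(k-1)$-level $2$-trees carrying the colourings $c_1,c_2$, and $\chi(k-1)$ is by definition the minimum number of automorphisms over all such colourings. Taking the minimum over colourings $c$ of $T$ gives $\chi(k)\ge(\chi(k-1))^2$. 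I do not expect a real obstacle here; the only point demanding care is the bookkeeping that a map which is an automorphism of a single branch, when viewed inside the whole tree, genuinely fixes that branch's root and never transports vertices across branches — both of which are exactly what the level-preservation Lemma guarantees. (One could also note that when $(T_1,c_1)\cong(T_2,c_2)$ the branch-swapping automorphisms enlarge $\mathrm{Aut}(T,c)$ further, but they are not needed for the stated bound.)
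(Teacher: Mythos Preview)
Your proof is correct and follows essentially the same approach as the paper: decompose the $k$-level tree into the two $(k-1)$-level subtrees rooted at $B_1$ and $B_2$, and glue any pair of automorphisms of these subtrees (fixing $A_1$) into an automorphism of the whole tree, using Lemma~1 to guarantee that each subtree automorphism fixes its root. Your write-up is in fact more careful than the paper's in spelling out why $f_i(B_i)=B_i$, why the glued map preserves the edges $A_1B_1$, $A_1B_2$, and why distinct pairs give distinct automorphisms.
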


\begin{proof}
In a $k$-level tree the descendants of $B_1$ (including $B_1$) form a $(k-1)$-level tree $T_1$.
This graph has at least $\chi(k-1)$ different automorphisms. The same is true for the tree $T_2$
formed by the descendants of $B_2$.  
Let $g$ and $h$ be automorphisms of $T_1$ and $T_2$ respectively.  
Now we can define a mapping $f$ of the whole tree, applying $g$ to descendants of $B_1$, $h$ to descendants of $B_2$, and $A$ to itself.  
Obviously $f$ is an automorphism: for $X = A$ the condition holds since $B_1$ and $B_2$ were mapped to themselves (by Lemma 1),
and for $X$ in $T_1$ or $T_2$ because $g$ and $h$ are automorphisms.  
Thus for each pair $(g,h)$ there is an automorphism $f$, different pairs produce different $f$, and the number of pairs is at least $(\chi(k-1))^2$.
\end{proof}

\begin{corollary}\\
For $k \geq 3$ we have
\[
\chi(k) \geq 2^{2^{k-3}}.
\]
\end{corollary}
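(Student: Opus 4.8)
The plan is to prove the bound by induction on $k$, letting the preceding Proposition ($\chi(k)\ge(\chi(k-1))^{2}$ for all $k\ge2$) carry the inductive step; the only real content is the base case $k=3$, for which we must show $\chi(3)\ge2=2^{2^{0}}$.

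For the base case I would show that a $3$-level $2$-tree always has a nontrivial colour-preserving automorphism, regardless of how its four leaves are coloured. Partition the leaves into the pair of children of $B_1$ and the pair of children of $B_2$. If one of these pairs is monochromatic, transpose its two leaves and fix every other vertex; this clearly preserves edges (the only edges affected are the two joining $B_1$, resp.\ $B_2$, to its children, and they are swapped with each other) and, since both transposed leaves have the same colour, it preserves colours. Otherwise each of $B_1$ and $B_2$ has exactly one blue and one golden child, so the two leaf-pairs carry the same multiset of colours; in that case swap $B_1\leftrightarrow B_2$, sending the blue child of $B_1$ to the blue child of $B_2$ and the golden to the golden (and symmetrically), and fix $A_1$. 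This is a colour-preserving automorphism that is not the identity. All the permutations exhibited preserve levels by construction, in accordance with Lemma~1. Since the identity is always an automorphism, in every case there are at least two, so $\chi(3)\ge2$.

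For the inductive step, suppose $k\ge4$ and $\chi(k-1)\ge2^{2^{k-4}}$. Then the Proposition gives
\[
\chi(k)\ \ge\ \bigl(\chi(k-1)\bigr)^{2}\ \ge\ \bigl(2^{2^{k-4}}\bigr)^{2}\ =\ 2^{\,2\cdot2^{k-4}}\ =\ 2^{2^{k-3}},
\]
which closes the induction. The inductive step is mechanical once the Proposition is in hand, so the main --- and essentially only --- obstacle is the base case: one has to organise the short case split on the colouring of the four leaves and verify in each case that the permutation written down really is an automorphism. It is also worth remarking why the statement begins at $k=3$: a $2$-level tree whose two leaves receive different colours has only the identity automorphism, so $\chi(2)=1$, and indeed the quantity $2^{2^{k-3}}$ is not even an integer at $k=2$.
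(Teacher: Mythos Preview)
Your proof is correct and follows essentially the same route as the paper: induction on $k$ with Proposition~1 supplying the step, and the base case $\chi(3)\ge2$ handled by the same two-case split (either some $B_i$ has monochromatic children, in which case swap those two leaves, or both $B_i$ have one leaf of each colour, in which case swap $B_1\leftrightarrow B_2$ together with a colour-matching bijection of their children). Your additional remarks on why the statement begins at $k=3$ are correct but go slightly beyond what the paper records.
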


\begin{proof}
This inequality is proved by induction, with Proposition 1 as the induction step.  
It remains to check it for $k = 3$.  
If in a $3$-level $2$-tree at least one of the vertices $B_1, B_2$ has two descendants of the same colour,
there is an automorphism exchanging these two vertices and preserving the rest.  
If each of $B_1, B_2$ has one blue and one golden descendant,
there is an automorphism exchanging $B_1$ and $B_2$ and preserving colours of their descendants.  
In both cases the number of automorphisms (including the identical one) is at least $2$.
\end{proof}

\medskip

\noindent
\textbf{Second proof of the lower bound (without induction).}  \\
We already know that every $3$-level $2$-tree with four coloured leaves has at least two colour-preserving automorphisms.  
Now every $n$-level tree, $n \geq 3$, has $2^{n-3}$ vertices of level $n-2$, and the descendants of each of these vertices form a $3$-level tree.  
It is enough to consider automorphisms preserving vertices of level $n-3$ (and, a fortiori, of all lesser levels).  
Such an automorphism can act on the descendants of each of $2^{n-3}$ vertices of level $n-2$ in at least $2$ ways.  
Thus there are at least $2^{2^{n-3}}$ such automorphisms.

It remains to construct for each $k \geq 3$ a colouring of a $k$-level tree
admitting exactly $2^{2^{k-3}}$ automorphisms. As it happens sometimes, we will prove somewhat more.

\begin{proposition}
For each $k \geq 3$ there are three colourings $M_1, M_2, M_3$ of leaves of a $k$-level $2$-tree
such that the trees with these colourings are not isomorphic, and each of these colourings admits exactly $2^{2^{k-3}}$
automorphisms.
\end{proposition}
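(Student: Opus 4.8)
The plan is to prove the proposition by induction on $k$, running alongside the inductive argument of Proposition 1 but now transporting a concrete family of three colourings rather than just a count. The reason I would carry \emph{three} colourings (not two) is that the induction step needs, at each stage, three distinct unordered pairs of mutually non-isomorphic colourings to hang below $B_1$ and $B_2$, and a $3$-element set has exactly $\binom{3}{2}=3$ two-element subsets.

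First I would dispose of the base case $k=3$ by hand. Label the four leaves so that $C_1,C_2$ descend from $B_1$ and $C_3,C_4$ from $B_2$, and take the three colourings $(C_1,C_2,C_3,C_4)=(\text{blue},\text{blue},\text{blue},\text{golden})$, $(\text{golden},\text{golden},\text{blue},\text{golden})$, and $(\text{blue},\text{golden},\text{blue},\text{golden})$. Any automorphism fixes $A$ (Lemma 1), permutes $\{B_1,B_2\}$, and may transpose the two leaves below a given $B_j$ only when they share a colour, so a short finite check shows each of the three colourings has exactly $2=2^{2^{0}}$ automorphisms. They are pairwise non-isomorphic because the number of blue leaves --- an isomorphism invariant, since by Lemma 1 an isomorphism restricts to a colour-preserving bijection of the leaves --- equals $3$, $1$, and $2$ respectively.

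For the induction step I would assume three pairwise non-isomorphic colourings $M_1,M_2,M_3$ of a $(k-1)$-level $2$-tree, each with exactly $2^{2^{k-4}}$ automorphisms. In a $k$-level tree, the descendants of $B_1$ together with $B_1$, and those of $B_2$ together with $B_2$, form two disjoint $(k-1)$-level trees; I would colour them using one of the three unordered pairs $\{M_1,M_2\}$, $\{M_2,M_3\}$, $\{M_1,M_3\}$, obtaining colourings $N_1,N_2,N_3$. To see that each $N_i$ has exactly $2^{2^{k-3}}$ automorphisms: by Lemma 1 an automorphism $f$ of $N_i$ preserves levels, so it fixes $A$, permutes $\{B_1,B_2\}$, and (being edge-preserving) carries the descendant-tree of each $B_\ell$ onto the descendant-tree of $f(B_\ell)$; since the two subtrees of $N_i$ were chosen non-isomorphic, $f$ cannot swap $B_1$ and $B_2$, so it restricts to an automorphism of each subtree, and conversely any pair of subtree-automorphisms glues together. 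Hence $|\mathrm{Aut}(N_i)|=2^{2^{k-4}}\cdot 2^{2^{k-4}}=2^{2^{k-3}}$. Non-isomorphy of $N_1,N_2,N_3$ then follows because the unordered pair of isomorphism types of the two hanging subtrees is itself an isomorphism invariant of a $k$-level colouring (an isomorphism maps $\{B_1,B_2\}$ to $\{B_1,B_2\}$ and subtrees to subtrees, as just argued), and the three pairs are distinct.

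The step I expect to be the main obstacle is the word ``exactly'' in the automorphism count --- excluding unwanted automorphisms rather than exhibiting enough of them. This reduces to two points I would want to pin down carefully: that Lemma 1 forces every automorphism to respect the coarse tree structure (fix $A$, permute $\{B_1,B_2\}$, send each hanging subtree to a hanging subtree), and that non-isomorphic siblings genuinely forbid the transposition of $B_1$ and $B_2$, since such a transposition would restrict to an isomorphism between the two subtrees. A lesser point requiring care is verifying in the base case that the bound $2$ is actually attained and not merely a lower bound, which the explicit finite check settles. As a by-product the construction gives $\chi(k)\le 2^{2^{k-3}}$, which together with Corollary 1 yields $\chi(k)=2^{2^{k-3}}$ for all $k\ge 3$; in particular $\chi(10)=2^{2^{7}}$, the claimed answer.
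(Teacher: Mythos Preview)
Your proposal is correct and follows essentially the same route as the paper: the same three base colourings at $k=3$ (up to relabelling leaves), and the same induction step of hanging two non-isomorphic $M_i$'s below $B_1$ and $B_2$ to build the next level. Where the paper simply declares the induction step ``quite obvious'', you actually supply the argument (Lemma~1 forces level-preservation, non-isomorphic siblings forbid the $B_1\leftrightarrow B_2$ swap, hence the automorphism group factors as a product), and your leaf-count invariant for distinguishing the base colourings is a clean touch the paper does not spell out.
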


\begin{proof}
For $k = 3$ let $C_1, C_2$ be the descendants of $B_1$, and $C_3, C_4$ the descendants of $B_2$.  
The three colourings are the following: 
\[
\{C_1, C_2, C_3 \ \text{blue}, \ C_4 \ \text{golden}\}; \quad
\{C_1, C_2, C_3 \ \text{golden}, \ C_4 \ \text{blue}\}; \quad
\{C_1, C_3 \ \text{blue}, \ C_2, C_4 \ \text{golden}\}.
\]
Obviously the trees with these colourings are not isomorphic and admit two automorphisms each.

\medskip
\noindent
\emph{The induction step.}  
Let $M_1, M_2, M_3$ be the desired colourings of a $k$-level tree.  
Consider the following colourings of the $(k+1)$-level tree:
\begin{itemize}
  \item $M_1$ for descendants of $B_1$ and $M_2$ for descendants of $B_2$;
  \item $M_2$ for descendants of $B_1$ and $M_3$ for descendants of $B_2$;
  \item $M_3$ for descendants of $B_1$ and $M_1$ for descendants of $B_2$.
\end{itemize}
It is quite obvious that these three colourings are not isomorphic and have the desired number of automorphisms.
\end{proof}

\medskip

\noindent
\textbf{Comment.}  
Note that in fact we solved the following problem: find a colouring of an $(n-2)$-level tree in 3 colours such that only the identical automorphism preserves the colours.  
Indeed, there are three mutually non-isomorphic colourings of a $3$-level tree in $2$ colours having only $2$ automorphisms.  
We want the colouring of the descendants of each vertex of level $n-2$ to be one of these colourings.  
The correspondence between vertices of level $n-2$ and these three colourings must be the desired colouring of the $(n-2)$-level tree admitting only the identical automorphism.

\end{tcolorbox}

\medskip
\medskip

\subsubsection{Detailed solution of Number Theory problems}

The second main group of problems from our benchmark are number theory problems. Similarly this problem was incorrectly solved by Gemini 2.5 Pro and correctly solved by GPT-5 Thinking.

\begin{tcolorbox}[breakable, colframe=blue!60!yellow, colback=blue!5!white, title=Sample Olympiad Number Theory Problem and Full Solution]
\textbf{Problem:} 
Find the greatest integer n, \(n > 10\)
 such that the remainder of n  when divided to each square between 2 and n/2 is an odd integer.

\newcommand{\thinline}{\noindent\rule{\linewidth}{0.2pt}}

\thinline

\textbf{Solution:}
The required number is 505. \\
\textbf{Example.} First, note that the remainder of $n$ when divided by $4$ is odd, hence $n$ is odd. Furthermore, observe that the quotient of $n$ when divided by a square less than $n/2$ is greater than or equal to $2$. On the other hand, the quotient of a division by an odd square cannot equal $3$, as the remainder would be even. Consequently, there are no positive integers $k$ such that $$3 \le \frac{n}{(2k-1)^2} < 4,$$ in other words, there is no $k \in \mathbb{N}$ with $\frac{n}{4} < (2k-1)^2 \le \frac{n}{3}$. Let $m \in \mathbb{N}^*$ so that $(2m-1)^2 \frac{n}{4} < \frac{n}{3} < (2m+1)^2$. Then $(2m+1)^2 - (2m-1)^2 > \frac{n}{3} - \frac{n}{4}$, hence $8m > \frac{n}{12}$. It follows that $(2m-1)^2 \le \frac{n}{4} < 24 \cdot m,$ so $m \in \{1, 2, \dots, 6\}$. Since $n < 96 \cdot m \le 576$, then the odd squares less than $n/2 < 288$ are
$9, 25, \dots, 225$. Recall that the quotients at the division by $9, 25, \dots, 225$ are even, so the quotients
at the division by $225$ and $169$ are both $2$ (else $4 \cdot 169 > 576$).
Thus $n = 450+a = 338+b$ with $0 < a < 225$, $0 < b \le 137$ and $a, b$ are odd, so $n \le 338+137 = 505$. For $n = 505$ one can easily check the claim.

\end{tcolorbox}

\medskip
\medskip

\subsubsection{Detailed solution of Graph Theory problems}

This problem is one that
Gemini 2.5 Pro failed to solve correctly however GPT-5 Thinking did solve correctly.

\medskip

\begin{tcolorbox}[breakable, colframe=blue!60!yellow, colback=blue!5!white, title=Sample Olympiad Graph Theory Problem and Full Solution]
\textbf{Problem:} 
Let \(G\) be a graph with \(9\) vertices.  
Suppose that for any chosen set of five vertices of \(G\), there are at least two edges whose endpoints are both contained in that five-vertex set.  
What is the minimum possible number of edges in \(G\)?

\newcommand{\thinline}{\noindent\rule{\linewidth}{0.2pt}}

\thinline

\textbf{Solution:}
The minimum is 9, achieved by three disjoint 3-cycles.\\
Let $a_n$ be the minimum number of edges in a graph on $n$ vertices satisfying the given condition.We show that $a_{n+1} \ge \frac{n+1}{n-1} a_n$. \\
Indeed,
given such a graph on $n+1$ vertices, let $l_i$ be the number of edges
of the graph obtained by removing vertex $i$ and all edges incident to
it. \\

Then $l_i \ge a_n$; on the other hand, $l_1 + \dots + l_{n+1} = (n-1)a_{n+1}$
since every edge is counted for every vertex except its endpoints.
The desired inequality follows.

Since $a_5 = 2$, we get $a_6 \ge 3$, $a_7 \ge 5$, $a_8 \ge 7$, $a_9 \ge 9$.

\end{tcolorbox}

\medskip
\medskip

\subsubsection{Detailed solution of Algebra problems}

This problem is one that
Gemini 2.5 Pro failed to solve correctly however GPT-5 Thinking did solve correctly.

\medskip

\begin{tcolorbox}[breakable, colframe=blue!60!yellow, colback=blue!5!white, title=Sample Olympiad Algebra Problem and Full Solution]
\textbf{Problem:} 
Determine the number of pairs of integers $(m,n)$ such that
\[
\sqrt{\,n + \sqrt{2016}\,} \;+\; \sqrt{\,m - \sqrt{2016}\,} \;\in \;\mathbb{Q}.
\]

\newcommand{\thinline}{\noindent\rule{\linewidth}{0.2pt}}

\thinline

\textbf{Solution:}
The answer is 1.\\
Let \[
r = \sqrt{n + \sqrt{2016}} + \sqrt{m - \sqrt{2016}}.
\]

Then

\[
n + m + 2 \sqrt{\,\bigl(n + \sqrt{2016}\bigr)\bigl(m - \sqrt{2016}\bigr)} \;=\; r^2
\]

and

\[
(m-n)\sqrt{2016} \;=\; \tfrac{1}{4}\,\bigl(r^2 - m - n\bigr)^2 \;-\; mn + 2016 \;\in\; \mathbb{Q}.
\]

Since \(\sqrt{2016} \notin \mathbb{Q}\), it follows that \(m = n\).

\[
\sqrt{n^2 - 2016} \;=\; \tfrac{1}{2}\,(r^2 - 2n) \;\in\; \mathbb{Q}.
\]

Hence, there is some nonnegative integer \(p\) such that
\[
n^2 - 2016 = p^2,
\]
and therefore
\[
2n + 2p = r^2.
\]

It follows that
\[
2(n+p) = r^2
\]
is the square of a rational and also an integer, hence a perfect square.

On the other hand,
\[
2016 = (n-p)(n+p),
\]
and \(n+p\) is a divisor of \(2016\), larger than \(\sqrt{2016}\).
Since \(n+p\) is even, so is also \(n-p\), and
\[
r^2 = 2(n+p)
\]
is a divisor of
\[
2016 = 2^5 \cdot 3^2 \cdot 7,
\]
larger than \(2\sqrt{2016} > 88\).
The only possibility is
\[
r^2 = 2^4 \cdot 3^2 = 12^2.
\]

Hence, \(n+p = 72\) and \(n-p = 28\), so we conclude that \(n = m = 50\).

\[
\boxed{\text{Thus, there is only one such pair.}}
\]

\end{tcolorbox}

\end{document}